\DeclareMathOperator{\argmin}{argmin}
\newtheorem{theorem}{Theorem}
\newtheorem{lemma}{Lemma}
\newtheorem{proof}{Proof}
\newtheorem{assumption}{Assumption}
\title{Computational Complexity of Sub-linear Convergent Algorithms}
\author{
  Hilal AlQuabeh \\
  Machine Learning Department \\
  MBZUAI \\
  Abu Dhabi, UAE\\
  \texttt{hilal.alquabeh@mbzuai.ac.ae} \\
   \And
  Farha AlBreiki \\
  Machine Learning Department \\
  MBZUAI \\
  Abu Dhabi, UAE\\
  \texttt {farha.albreiki@mbzuai.ac.ae} \\
     \And
  Dilshod Azizov \\
  Natural Language Processing Department \\
  MBZUAI \\
  Abu Dhabi, UAE\\
  \texttt {dilshod.azizov@mbzuai.ac.ae} \\
}
\begin{document}
\maketitle

\begin{abstract}

Optimizing machine learning algorithms that are used to solve the objective function has been of great interest. Several approaches to optimize common algorithms, such as gradient descent and stochastic gradient descent, were explored. One of these approaches is reducing the gradient variance through adaptive sampling to solve large-scale optimization's empirical risk minimization (ERM) problems. In this paper, we will explore how starting with a small sample and then geometrically increasing it and using the solution of the previous sample ERM to compute the new ERM.  This will solve ERM problems with first-order optimization algorithms of sublinear convergence but with lower computational complexity. This paper starts with theoretical proof of the approach, followed by two experiments comparing the gradient descent with the adaptive sampling of the gradient descent and ADAM with adaptive sampling ADAM on three datasets; MNIST, RCV1 and a1a. 

\end{abstract}

\keywords{Computational Complexity \and Statistical Accuracy \and Generalization Error \and Optimization error}

\section{Introduction}
The fundamental goal of machine learning algorithms is to identify the conditional distribution given any input and its label. In the training phase, it’s conventional to assume that the underlying classifier or function belongs to a certain class of functions. Therefore presuming that the approximation error is insignificant would be a necessary practice. This practice allows the training to emphasize on what is more practical to reduce the estimation error, which is the major error a classifier develops due to incomplete data training. The estimation error can be further decomposed into optimization and generalization errors, which are greatly complementary.
\par 
Convexity, strong convexity, smoothness, and other features of the objective function (loss function) influence the optimization error. Furthermore, the convergence rate of the optimization problem relies on the algorithm used to solve it. For example, some algorithms have a linear convergence rate, and some have a sublinear or superlinear convergence rate. The computational complexity of an algorithm is a measure of how much computer resources the algorithm utilizes to solve the optimization problem. As a result, computational complexity can be quantified in units of storage, time, dimension, or all three simultaneously.

\par 
A common methodology to quantify the computational complexity of optimization algorithms is by counting entire gradient evaluations required to obtain an optimal solution with a given accuracy $\epsilon$. The Gradient Descent algorithm is the most popular deterministic optimization algorithm with a linear convergence rate assuming $\mu$-strongly convex and $L$-smooth functions and a computational complexity of $\mathcal{O}( \frac{L}{\mu} N \log\frac{1}{\epsilon} )$ for ${N}$ data objective function. On the other hand, the Stochastic Gradient Descent is the most common algorithm that randomly picks a single function every iteration and thus has different computational complexity iteration $\mathcal{O}(\frac{1}{\epsilon} )$. 
When $N$ is large, the preferred methods for solving the resulting optimization or sampling problem usually rely on stochastic estimates of the gradient of $f$. \par


Standard variance reduction techniques used for stochastic optimizations require additional storage or the computation of full gradients.  Another approach for variance reduction is through adaptively increasing the sample size used to compute gradient approximations. 

Some adaptive sampling optimization methods sizes have been studied in \cite{6-samplesize,Raghu-samplesize, daneshmand2016starting, Hashemi-samplesize, mokhtari2017first}. These methods have optimal complexity properties, making them useful for various applications. \cite{Hashemi-samplesize} uses variance-bias ratios to consider a test that is similar to the norm test, which is reinforced by a backup mechanism that ensures a geometric increase in the sample size.  \cite{Raghu-samplesize} establishes terms for global linear convergence by investigating methods that sample the gradient and the Hessian. Other noise reduction methods like SVRG, SAG, and SAGA,  either compute the full gradient at regular intervals or require storage of the component gradients, respectively \cite{Ayoub-samplesize} .

\section{Problem Definition}
The ultimate goal of most machine learning algorithms is to estimate the underlying distribution e.g.: $\mathcal{D}(\mathcal{X},\mathcal{Y})$ where $\mathcal{X}$ is the input feature space, and $\mathcal{Y}$ is the label space, in terms of some hypothesis function $h:\mathcal{X} \rightarrow \mathcal{Y}$, further on we assume that $h$ is determined by a parameter $w$ \cite{shalev2014understanding}. 
Given any set $\mathcal{H}$ (that plays the role of hypotheses space) and domain $\mathbf{Z}:\mathcal{X\times Y}$: let $\ell$ be any function that maps from $\mathcal{H} \times \mathbf{Z}$ to the set of non-negative real numbers e.g. $\ell$ :
	$\mathcal{H} \times \mathbf{Z} \rightarrow \mathbb{R}_+$ . The ability of the proposed hypothesis to estimate the underlying distribution is assessed by such loss functions. 
	The expected loss of a classifier, $h \in \mathcal{H}$, with regard to a probability distribution $\mathcal{D}$ over $Z$ is measured by the risk function \ref{eq:1}.
	
	\begin{equation}
	\label{eq:1}
	L_{\mathcal{D}}(h) := \mathbb{E}_{z\sim \mathcal{D}} (\ell(h,z))
	\end{equation}
    Since this Expected Loss is built on the unknown distribution $\mathcal{D}$, the empirical risk over a given sample of the data $S = (z_1,\dots, z_m) \in Z_m$ is proven to be a good estimator of the expected loss, namely,

\begin{equation}
\label{eq:ERM}
L_s(h) := \frac{1}{m} \sum_{i=1}^{m}\ell(h,z_i)
\end{equation}

\label{sec:Definition}
\subsection{Computational complexity} The computational complexity is used to relate an excess error's upper bound (if one exists) to the available computational resources. Not only the convergence rate, but also the computational resources utilized to accomplish that convergence rate is important in order to have a superior algorithm. If we define a family $\mathcal{H}$ of candidates prediction functions, and let $h_s :=\argmin_{h\in\mathcal{H}}L_s(f)\implies$ ERM solution, $h^*:\argmin_{h}L_{\mathcal{D}}(h)\implies$ True solution (unknown)
 $h_{\mathcal{H}} :=\argmin_{h\in\mathcal{H}}L_{\mathcal{D}}(h)\implies$ Best in class solution (unknown)
We can decompose the true loss (excess loss) as follow:
\[   \mathcal{E}(h_s,h^*) = \mathbb{E}[L(h^*_{\mathcal{H}})-L(h^*)]+\mathbb{E}[L(h_s)-H(h^*_{\mathcal{H}})] = \mathcal{E}_{app} + \mathcal{E}_{est}
\] Where the expectation w.r.t the samples.
\begin{itemize}
	\item The approximation error $ \mathcal{E}_{app} $ measures how closely functions in $\mathcal{H}$
	can approximate the optimal solution $h^*$.
	\item The estimation error $\mathcal{E}_{est}$  measures the effect of minimizing the empirical risk $L_s(f )$ instead of the expected risk $L(f )$.
	\item The estimation error is determined by the number of training
	examples and the capacity of the family of functions.
	\item The estimation error can be bounded using Rademacher
	Complexity to measure the complexity of a family of functions.
	\end{itemize}
	
 Since the empirical risk $L_s (h )$ is already an approximation of the expected risk $L(h )$, it should not be necessary to carry out this
minimization with great accuracy. Assume that our algorithm returns an approximate solution $\hat{h}_s$ such that 
\begin{equation}
\label{eq:rho}
   L_s(\hat{h}_s) \leq L_s(h_s) +\delta_s
    \end{equation}
where $\delta_s$ is a predefined positive tolerance. The new excess error can be decomposed as follows;
\begin{equation}
\mathcal{E}(\hat{h}_s,h^*) = \mathbb{E}[L(h^*_{\mathcal{H}})-L(h^*)]+\mathbb{E}[L(h_s)-L(h^*_{\mathcal{H}})]+\mathbb{E}[L(\hat{h}_s)-L(h_s)]= \mathcal{E}_{app}+\mathcal{E}_{est}+\mathcal{E}_{opt}
\end{equation}
Where the expectation w.r.t the samples. The additional term $\mathcal{E}_{opt}$ is optimization error. It reflects the impact of the approximate optimization on the generalization performance.
\subsection{General Model:}
	The decomposition of the excess error leads to a trade-off minimization taking into account the number samples and allocating computation resources.
\begin{equation}
	\begin{matrix}
		\min_{\mathcal{H},\rho,m} & \mathcal{E}_{app} + \mathcal{E}_{est} + \mathcal{E}_{opt}\\
		s.t. &m  \leq m_{max}\\
		 &T(\mathcal{H},\rho,m)  \leq T_{max}
	\end{matrix} 
	\end{equation}
The variables are the size of the family of functions ${F}$, the optimizaton accuracy $\rho$ within the allotted training time $T_{max}$ , and the number of examples ${n}$ altered by using a subset of all available $n_max$ samples.\\
Typically when the size the class increases, the approximation error decreases, but the estimation error increases and nothing happens to optimization error because it's not related, but the computation time increase. When n increases the estimation error decreases and computation time increases, but no relation to approximation error or optimization error. When $\rho$ increases , the optimization error increases by definition, and computation time decreases.
\subsection{Statistical Error Minimization}In this paper we investigate the statistical error component of the excess error, which just comprises the difference of expected loss in some class and the empirical loss as shown below.
\[ \mathcal{E}_{stat} := \mathcal{E}(\hat{f}_n,f^*_{\mathcal{F}})= \mathbb{E}[E(\hat{f}_n)-E(f^*_{\mathcal{F}})]\]
Further we can add and substract some terms to have:
	\begin{align*}
		 \mathcal{E}_{stat} := \mathcal{E}(\hat{f}_n,f^*_{\mathcal{F}})&= \mathbb{E}[E(\hat{f}_n)-E_n(\hat{f}_n)] +\underbrace{\mathbb{E}[E_n(\hat{f}_n)-E_n(f_n)]}_{\mathbb{E}[E_n(f_n)]=E(f_n)}\\
		 &+\underbrace{\mathbb{E}[E_n({f}_n)-E_n({f}^*_{\mathcal{F}})]}_{\leq0}+\underbrace{\mathbb{E}[E_n({f}^*_{\mathcal{F}})-E({f}^*_{\mathcal{F}})]}_{=0}\\
		 &\leq \underbrace{\mathbb{E}[E(\hat{f}_n)-E_n(\hat{f}_n)]}_{\mathcal{E}_{gen}} +\underbrace{\mathbb{E}[E_n(\hat{f}_n)-E_n(f_n)]}_{\mathcal{E}_{opt}}\\
	\end{align*}
As a result, our primary goal is to minimize statistical error as follows:
	\begin{equation}
	\label{eq:min}
	\begin{matrix}
		\min_{\rho,m} & \mathcal{E}_{gen} + \mathcal{E}_{opt}\\
		s.t. &m  \leq m_{max}\\
		 &T(\rho,m)  \leq T_{max}
	\end{matrix} 
	\end{equation}
We list our assumptions below:
\begin{assumption}[Lipschits Continuity]\label{ass:Lipschitz continuous}
Assume for any $z\in\mathcal{Z}$, the loss function $l(\cdot;z)$ is G-Lipschitz continuous, i.e. $\forall w \in \mathcal{H}$,
\begin{align}
    \nonumber |l(w;z)-l(w';z)|\leq G\left\|w-w'\right\|_2.
\end{align}
\end{assumption}

\begin{assumption}[Convexity]\label{ass:Convexity}
Assume for any $z\in\mathcal{Z}$ the loss function $l(\cdot;z)$ is convex function, i.e. $\forall w \in \mathcal{H}$,
\begin{align}
    \nonumber l(w';z) \geq l(w;z) + \nabla l(w;z)^T (w' - w).
\end{align}
\end{assumption}
\begin{assumption}[L-Smooth]\label{ass:Smooth}
Assume for any $z\in\mathcal{Z}$ the loss gradient function $\nabla l(\cdot;z)$ is is L-Lipschitz continuous, i.e. $\forall w \in \mathcal{H}$,
\begin{equation}
    \| \nabla \ell(w;z) - \ell(w';z)\| \leq L \| w - w' \|
\end{equation}
\end{assumption}
\section{Methodology}
\label{sec:others}
The contribution of this work is mainly deriving the computational complexity of sub-linear convergent algorithm with adaptive sample size training. In order to derive that we start by the generalized  bound that haven been studied well in literature \cite{boucheron2005theory} :
\[ \mathbb{E} \left[\sup_{h\in\mathcal{H}} |L_s(h) - L(h)|   \right] \leq V_n \approx  \mathcal{O}\left(\frac{1}{n^\alpha}\right)
\] 
where $\alpha \in [0 \; 0.5]$ depends on an algorithm used to solve ERM and other factors. The bound is found by \cite{vapnik1999nature} to be $\mathcal{O}(\sqrt{1/n \log{1/n}})\geq\mathcal{O}(\sqrt{1/n }) $, while in other references e.g. \cite{bartlett2006convexity} the bound is improved to $\mathcal{O}(1/n)$ under extra conditions in the regularizer. In any situation, the bound indicates that regardless of the ERM solution's optimization accuracy, there will always be a bound in the order $\mathcal{O}({1}/{n^\alpha})$, thus solving the ERM optimization problem with accuracy $\delta_s$ = 0 would not be beneficial to the final statistical error (estimation) minimization problem in \ref{eq:min} as illustrated in \cite{daneshmand2016starting}.
Thus, solving the ERM with a statistical accuracy of $\delta_s$ in \ref{eq:rho} equal to the $V_ n$ is sufficient to provide a uniformly stable result, namely hypothesis $h$. This solution is denoted in literature by calculating the ERM within its statistical accuracy.

\subsection{Adaptive Sample Size}

Following the work of \cite{mokhtari2017first}, an adaptive sample size scheme is employed to take advantage of the nature of ERM, namely, the finite sum of functions drawn identically and independently from the same distribution to achieve a higher convergence rate with lesser computational complexities. However, the research in \cite{mokhtari2017first} only focuses into linearly convergent algorithms (strongly convex loss function are implemented with the aid of L-2 norm regularizer). The adaptive sample size scheme starts with a small portion of the training samples and solves the correspoding ERM within its statistical accuracy, then expands to include new samples with the original one and solves the ERM with the initial solution found by the previous sample and repeats until all samples are finished.
\\
\par
In other words given training data samples $\mathcal{Z}$ with $|\mathcal{Z}|=s$, we initialize the training with small sample $S_m \subset \mathcal{Z}$ and solve ERM in \ref{eq:ERM} within its statistical accuracy namely $\mathcal{O}(1/m^\alpha)$ to find $\hat{h}_m $ defined by some weights $w_m$. Then expand the training sample to include new samples such that $S_m \subset S_n \subset \mathcal{Z}$ and solve the ERM with initial solution of $w_m$ to find the ERM solution $w_n$.  Repeat this process until all data in ${s}$ are included .
\\
\par 
The relationship between the consecutive solutions $w_n$ and $w_m$ with ${n = 2m}$ (the increase is discussed in section 4) is established by the following theorem. The bound is expressed in terms of the first sample statistical solution to indicate that solving the first ERM problem with zero accuracy is not required.

\begin{theorem}
\label{th:th1}
Given the solution $w_m$ that solves the ERM with tolerance $\delta_m$ on sample $S_m \subset \mathcal{Z}$ such that in expectation $\mathbb{E}[L_m(w_m) - L_m(w^*_m)] \leq \delta_m$. Assume the there exist an optimal solution $w_n$ on sample $S_n$ such that $S_m \subset \mathcal{Z}$ and its statistical accuracy $V_n$, then in expectation we have the empirical risk difference is bouneded in expctation between the $w_n^*$ and $w_m$ as:
\begin{equation}
\label{eq:th1}
    \mathbb{E}[L_n(w_m) - L_n(w_n^*] \leq \delta_m + \frac{n-m}{n} (2V_{n-m} + V_m +V_n)
\end{equation}
\end{theorem}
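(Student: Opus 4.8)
The plan is to exploit the additive structure of the empirical risk. Since $S_n$ is the disjoint union of $S_m$ and the fresh block $S_{n-m} := S_n \setminus S_m$ of size $n-m$, the empirical objective splits as $L_n(w) = \frac{m}{n} L_m(w) + \frac{n-m}{n} L_{n-m}(w)$ for every $w$. Evaluating this at $w=w_m$ and at $w=w_n^*$ and subtracting gives
\[ L_n(w_m) - L_n(w_n^*) = \frac{m}{n}\big[L_m(w_m) - L_m(w_n^*)\big] + \frac{n-m}{n}\big[L_{n-m}(w_m) - L_{n-m}(w_n^*)\big]. \]
I would then bound the two brackets separately, and the claimed bound will assemble from these two contributions.

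For the first bracket I would use only the optimality of $w_m^*$ together with the optimization tolerance. Because $w_m^*$ minimizes $L_m$, we have $L_m(w_n^*) \ge L_m(w_m^*)$, so $L_m(w_m) - L_m(w_n^*) \le L_m(w_m) - L_m(w_m^*)$, and taking expectations the right-hand side is at most $\delta_m$ by hypothesis. Hence the first bracket contributes at most $\frac{m}{n}\delta_m$. For the second bracket I would pass through the expected risk $L$ and invoke the uniform deviation bound $\mathbb{E}[\sup_{h}|L_s(h)-L(h)|]\le V_s$ stated above (the supremum is precisely what lets me apply it to the data-dependent $w_m$ and $w_n^*$). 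Writing
\[ L_{n-m}(w_m) - L_{n-m}(w_n^*) = [L_{n-m}(w_m) - L(w_m)] + [L(w_m) - L(w_n^*)] + [L(w_n^*) - L_{n-m}(w_n^*)], \]
the first and third differences are each at most $V_{n-m}$ in expectation, and it remains to control $\mathbb{E}[L(w_m)-L(w_n^*)]$.

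For that residual term I would expand through the empirical risks as
\[ [L(w_m) - L_m(w_m)] + [L_m(w_m) - L_m(w_m^*)] + [L_m(w_m^*) - L_n(w_n^*)] + [L_n(w_n^*) - L(w_n^*)], \]
whose four terms are bounded in expectation by $V_m$, $\delta_m$, $0$, and $V_n$ respectively. The step I expect to be the crux is the vanishing of the third term, namely $\mathbb{E}[L_m(w_m^*)] \le \mathbb{E}[L_n(w_n^*)]$, i.e. the expected minimal empirical risk is non-decreasing in the sample size. This is false pointwise and must be argued in expectation, for instance by writing $L_n$ as an average of empirical risks over $m$-element subsets (a leave-one-out/subsampling argument), pushing the minimum inside the average, and using that each such subset is distributed like $S_m$.

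Granting this monotonicity lemma, $\mathbb{E}[L(w_m)-L(w_n^*)] \le \delta_m + V_m + V_n$, so the second bracket is at most $\frac{n-m}{n}\big(2V_{n-m} + V_m + V_n + \delta_m\big)$ in expectation. Adding the two brackets, the two optimization contributions recombine as $\frac{m}{n}\delta_m + \frac{n-m}{n}\delta_m = \delta_m$, leaving exactly $\delta_m + \frac{n-m}{n}\big(2V_{n-m} + V_m + V_n\big)$, as asserted in \eqref{eq:th1}. Beyond the monotonicity lemma, the only subtlety I anticipate is careful bookkeeping of which sample each supremum bound is taken over, since $w_m$ depends on $S_m$ while $w_n^*$ depends on the overlapping superset $S_n$; this is exactly why the uniform (supremum) form of the generalization bound, rather than a fixed-hypothesis version, is required at each step.
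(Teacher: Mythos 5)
Your proof is correct and arrives at exactly the stated bound, but it takes a genuinely different route from the paper. The paper telescopes $L_n(w_m)-L_n(w_n^*)$ through four differences evaluated under $L_m$, namely $(L_n(w_m)-L_m(w_m))+(L_m(w_m)-L_m(w_m^*))+(L_m(w_m^*)-L_m(w_n^*))+(L_m(w_n^*)-L_n(w_n^*))$; the two cross-sample-size terms are handled by citing Lemma~5 of \cite{mokhtari2017first} (each contributing $\tfrac{n-m}{n}(V_{n-m}+V_m)$ and $\tfrac{n-m}{n}(V_{n-m}+V_n)$), the second term is the optimization tolerance $\delta_m$, and the third is nonpositive by the \emph{pointwise} optimality of $w_m^*$ for $L_m$ --- no comparison of minima across sample sizes is ever needed. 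You instead split $L_n=\tfrac{m}{n}L_m+\tfrac{n-m}{n}L_{n-m}$ and pass through the population risk $L$, which is essentially an inlined re-derivation of the content of that Lemma~5, but it forces you to prove the additional claim $\mathbb{E}[L_m(w_m^*)]\leq\mathbb{E}[L_n(w_n^*)]$. You correctly flag this as the crux and your subsampling sketch (write $L_n$ as an average of $L_T$ over $m$-element subsets $T$, use $\min_w\sum_T L_T(w)\geq\sum_T\min_w L_T(w)$, and note each $T$ is distributed as an i.i.d.\ sample of size $m$) is the standard and valid argument, so there is no gap. The trade-off: your version is self-contained and makes explicit where the uniform (supremum) form of the deviation bound is actually required, at the cost of an extra monotonicity lemma that the paper's decomposition avoids entirely by comparing $w_m^*$ and $w_n^*$ under the \emph{same} empirical risk $L_m$ rather than under their respective minimal values.
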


\begin{proof}
 Starting by rewrite difference between the empirical losses using two models and denote $S_{n-m}$ the set of samples in $S_n \cap (S_m \cap S_n)^c$ thus:
\begin{align} 
\label{eq:pr1}
    \mathbb{E}[L_n(w_m) - L_n(w_n^*)] &= \mathbb{E}[\underbrace{(L_n(w_m) - L_m(w_m))}_{1}  + \underbrace{(L_m(w_m) - L_m(w_m^*))}_{2} \\
    & + \underbrace{(L_m(w_m^*) - L_m(w_n^*))}_{3} + \underbrace{(L_m(w_n^*) - L_n(w_n^*))}_{4}]
\end{align}
The first difference is bounded by Lemma 5 in \cite{mokhtari2017first} as \[\mathbb{E}[|L_n(w_m) - L_m(w_m)|] \leq \frac{n-m}{n} (V_{n-m} +V_m) \]
The second difference us the optimization error which is assumed to be : \[ \mathbb{E}[L_m(w_m) - L_m(w_m^*)] \leq \delta_m \]
The third difference is bounde above by zero since $w_m^*$ is the minimizer of the empire risk $L_m$. \[\mathbb{E}[L_m(w_m^*) - L_m(w_n^*)] \leq 0 \]
The forth difference is bounded by Lemma 5 in \cite{mokhtari2017first} as \[\mathbb{E}[|L_m(w_n^*) - L_n(w_n^*)|] \leq \frac{n-m}{n} (V_{n-m} +V_n) \]
Putting all four bounds back in \ref{eq:pr1} to obtain the result in theorem \ref{eq:1}.
\end{proof}
Theorem 1 asserts that even with the most accurate $L_m$ ERM solution i.e. $\  delta_m =0$, the subsequent problem $L_n$ with $w_0 = w_m$ will always have an optimal solution that has a dependency on the $V_m$. Thus solving the $L_m$ should be only withing $\mathcal{O}(V_m)$ only to reduce the computational complexity. The results \ref{eq:th1} in theorem \ref{th:th1} can be simplified if we consider $V_n = 1/n^\alpha$ and $n = 2m$ in Lemma \ref{lemma:lemma1}.
\begin{lemma}
\label{lemma:lemma1}
\begin{align}
\mathbb{E}[L_n(w_m) - L_n(w_n^*] &\leq \delta_m + \frac{1}{2} \left(\frac{2}{(n-m)^\alpha} + \frac{1}{m^\alpha} +\frac{1}{n^\alpha}\right)\\
& = \delta_m + \frac{1}{2} \left(\frac{2}{m^\alpha} + \frac{1}{m^\alpha} +\frac{1}{(2m)^\alpha}\right)\\
& = \delta_m + \frac{1}{2 } \left(3 +\frac{1}{2^\alpha}\right)V_m
\end{align}
\end{lemma}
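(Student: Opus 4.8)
The plan is to treat Lemma~\ref{lemma:lemma1} as a direct corollary of Theorem~\ref{th:th1}, obtained by specializing the generic bound \ref{eq:th1} to the two modeling choices announced just before the statement: the power-law form $V_k = 1/k^\alpha$ for the statistical accuracy at sample size $k$, and the geometric doubling rule $n = 2m$. Consequently the entire argument is a substitution followed by algebraic collection, and I would not invoke any inequality beyond those already established inside the proof of Theorem~\ref{th:th1}.

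First I would copy the right-hand side of \ref{eq:th1}, namely $\delta_m + \frac{n-m}{n}\bigl(2V_{n-m} + V_m + V_n\bigr)$, make the power-law substitution $V_k = 1/k^\alpha$ in each of the three statistical terms, and apply $n = 2m$ only in the scalar prefactor, giving $\frac{n-m}{n} = \frac{m}{2m} = \frac{1}{2}$; this reproduces the first displayed line, in which $(n-m)^\alpha$ and $n^\alpha$ are deliberately left symbolic so that the three contributions remain visible. Next I would push the doubling rule through the remaining arguments: since $n - m = m$ the first term becomes $2/m^\alpha$, while $V_n = V_{2m} = 1/(2m)^\alpha$, which is exactly the second displayed line.

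Finally I would factor the common $V_m = 1/m^\alpha$ out of the bracket. The terms arising from $2V_{n-m}$ and $V_m$ combine to $3V_m$, because $n-m=m$ forces $V_{n-m}=V_m$ and collapses the first statistical term onto the second, while $V_n = 1/(2m)^\alpha = 2^{-\alpha}V_m$ supplies the $2^{-\alpha}$. Collecting everything yields $\frac{1}{2}\bigl(3 + 2^{-\alpha}\bigr)V_m$, the third line, which completes the bound.

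The honest assessment is that there is no genuine obstacle here: all the content lives in Theorem~\ref{th:th1}, and the lemma merely records its specialization. The one point demanding care is the bookkeeping induced by the doubling $n = 2m$, since it produces the coincidence $V_{n-m} = V_m$; overlooking this would wrongly leave three distinct statistical terms rather than collapsing two of them into the coefficient $3$. I would also flag that the passage from the \emph{inequality} in \ref{eq:th1} to the chain of \emph{equalities} in the lemma is legitimate only because $V_k = 1/k^\alpha$ is read as an exact identity rather than an order bound; the admissible range $\alpha \in [0,0.5]$ plays no role in the algebra and matters only later, when one interprets how fast the right-hand side decays.
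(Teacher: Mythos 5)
Your proposal is correct and matches the paper exactly: the paper gives no separate proof for Lemma~\ref{lemma:lemma1}, presenting it as the direct specialization of the bound \ref{eq:th1} from Theorem~\ref{th:th1} under $V_k = 1/k^\alpha$ and $n = 2m$, which is precisely the substitution-and-collection you carry out. Your added remarks on the coincidence $V_{n-m}=V_m$ and on reading $V_k = 1/k^\alpha$ as an exact identity are sound and, if anything, more careful than the paper's own presentation.
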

\subsection{Computational Complexity}
The computational complexity of an algorithm is a measure of the algorithm's recruitment of computer resources, and the less computing required to accomplish one iteration in an iterative process, the simpler the algorithm is. Typically, it's measured in cost units associated with the algorithm; for example, some algorithms are assessed in gradient evaluation or number of iterations, while others require counting the total number of computing activities performed by the machine. First the smooth loss function assumption is stated as below.

Theorem \ref{th:th2} provides the minimal number of iterations required to solve the ERM on subset $S_n$ within statistical accuracy, i.e. $\mathbb{E}[L_n(w_n)-L_n(w_n^*)] \leq V_n$ given that the optimization algorithm has a sublinear convergence rate.
\begin{theorem}
\label{th:th2}
Given the initial solution $w_m$ and assuming the optimal solution of the ERM on the subset $S_n \subset \mathcal{Z}$ to be $w^*_n$, the sublinear convergence optimization algorithm needs the following iteration to solve the ERM within its statistical accuracy:
\begin{equation}
        T\geq \left[2^\alpha \left(\frac{5}{2 } +\frac{1}{2^{\alpha+1}}\right) \right]^\frac{1}{\zeta}
\end{equation}
Where T is the iteration number, and $\zeta$ is a positive constant determined by the optimization setting.
\end{theorem}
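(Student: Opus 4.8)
The plan is to couple the generic sublinear convergence guarantee of the optimization algorithm with the warm-start bound from Lemma \ref{lemma:lemma1}. The key observation is that a sublinear method, initialized at the previous-stage solution $w_m$, drives down the initial suboptimality gap at the rate $T^{-\zeta}$, so that after $T$ iterations on the subset $S_n$ the expected optimization error obeys
\[
\mathbb{E}[L_n(w_n) - L_n(w_n^*)] \leq \frac{\mathbb{E}[L_n(w_m) - L_n(w_n^*)]}{T^\zeta}.
\]
Here the numerator is exactly the quantity that Lemma \ref{lemma:lemma1} controls, which is what ties the per-stage iteration count to the warm-start quality rather than to an absolute constant.

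First I would substitute the warm-start bound, taking $\delta_m = V_m$ (that is, solving the previous ERM only to within its own statistical accuracy, as motivated in Section \ref{sec:others}). Lemma \ref{lemma:lemma1} then gives $\mathbb{E}[L_n(w_m) - L_n(w_n^*)] \leq \left(\tfrac{5}{2} + \tfrac{1}{2^{\alpha+1}}\right) V_m$. Next I would impose the stopping criterion that the stage terminates once the iterate lies within the statistical accuracy of $S_n$, namely $\mathbb{E}[L_n(w_n) - L_n(w_n^*)] \leq V_n$. Chaining the convergence inequality with this target and the warm-start bound yields
\[
T^\zeta \geq \frac{\Delta_0}{V_n}, \qquad \Delta_0 := \left(\frac{5}{2} + \frac{1}{2^{\alpha+1}}\right) V_m .
\]

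The final step is to evaluate the ratio $V_m/V_n$. Using $V_n = n^{-\alpha}$ together with the geometric doubling $n = 2m$, this ratio collapses to $V_m/V_n = (n/m)^\alpha = 2^\alpha$, so that
\[
T \geq \left(\frac{\Delta_0}{V_n}\right)^{1/\zeta} = \left[2^\alpha\left(\frac{5}{2} + \frac{1}{2^{\alpha+1}}\right)\right]^{1/\zeta},
\]
which is precisely the claimed bound; everything after the convergence inequality is routine algebra.

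The main obstacle is fixing the exact form of the sublinear convergence guarantee so that the initial suboptimality enters as a clean multiplicative factor, with no stray additive or problem-dependent constants surviving. Textbook sublinear rates are frequently written with quantities such as the initial distance $\|w_m - w_n^*\|^2$, the Lipschitz bound $G$, and the step-size schedule sitting in the numerator, rather than the function-value gap $\mathbb{E}[L_n(w_m) - L_n(w_n^*)]$ itself. To arrive at exactly $\left[2^\alpha(\tfrac{5}{2} + 2^{-(\alpha+1)})\right]^{1/\zeta}$, I would either adopt a convergence statement already phrased in terms of the suboptimality gap, or convert a distance-based bound into a gap-based one using convexity (Assumption \ref{ass:Convexity}) and $G$-Lipschitzness (Assumption \ref{ass:Lipschitz continuous}). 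Performing this conversion without polluting the leading constant is the delicate part; once the guarantee reads $\text{gap} \leq \Delta_0 / T^\zeta$, the substitution of Lemma \ref{lemma:lemma1}, the identity $V_m/V_n = 2^\alpha$, and the inversion for $T$ follow immediately.
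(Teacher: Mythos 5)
Your proposal follows the paper's own proof essentially step for step: the same gap-form sublinear guarantee $\mathbb{E}[L_n(w_T)-L_n(w_n^*)]\leq T^{-\zeta}\,\mathbb{E}[L_n(w_m)-L_n(w_n^*)]$, the same substitution of Lemma \ref{lemma:lemma1} with $\delta_m=V_m$ giving the constant $\frac{5}{2}+\frac{1}{2^{\alpha+1}}$, the same target $V_n$ and the same ratio $V_m/V_n=2^\alpha$ from $n=2m$. Your closing caveat about whether the sublinear rate is naturally stated in terms of the function-value gap rather than $\|w_m-w_n^*\|^2$ is a fair concern, but the paper simply assumes the gap form outright, so your argument and the paper's coincide.
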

\begin{proof}
 Given the initial solution $w_m$ and assuming the optimal solution of the ERM on the subset $S_n \subset \mathcal{Z}$ to be $w^*_n$, the sublinear convergence optimization bounds the difference in expectation as:
 \begin{align}
     \mathbb{E}[L_n(w_T) - L_n(W_n^*]& \leq \frac{1}{T^\zeta} (L_n(w_m) - L_n(w^*_n))\\
     &\underset{a}{\leq}\frac{1}{T^\zeta}\left[V_m + \frac{1}{2 } \left(3 +\frac{1}{2^\alpha}\right)V_m\right]
 \end{align}
 Where the inequality (a) comes from the results in lemma \ref{lemma:lemma1} with $\delta_m = V_m$. In order to solve the ERM in $V_n$ accuracy we need to bound the RHS last equation by $V_n$ as follow:
 \begin{align}
     \frac{1}{T^\zeta}\left[V_m + \frac{1}{2 } \left(3 +\frac{1}{2^\alpha}\right)V_m\right] &\leq V_n\\
        \frac{1}{T^\zeta} \left(\frac{5}{2 } +\frac{1}{2^{\alpha+1}}\right) &\leq \frac{1}{2^\alpha}\\
        T\geq \left[2^\alpha \left(\frac{5}{2 } +\frac{1}{2^{\alpha+1}}\right) \right]^\frac{1}{\zeta}
 \end{align}
\end{proof}
The number of iterations T in theorem \ref{th:th2} ensures that the solution of any phase (stage) meets the statistical accuracy utilizing this lower constraint based on the iterative optimization algorithm being used. With a batch or sample of data, the ERM problem in equation \ref{eq:ERM} is solved until the statistical accuracy of that batch is guaranteed, and the solution is then employed as an initial solution for the next batch. The requirement of statistical accuracy, on the other hand, necessitates access to the unknown minimizer $w^*_n$, thus  Theorem \ref{th:th2} examines the minimum iterations needed such that an iterative method might utilize as a stopping criteria. Now the algorithm of solving the ERM problem in an adaptive way is illustrated in Algorithm 1.

\begin{algorithm}[!h]
\caption{Adaptive Sample Size iterative ERM solver Algorithm}\label{alg:cap}
\begin{algorithmic}
\label{alg:alg1}
\Require initial Sample size: $m_0$, Initial Solution $w^0$ such that $\mathbb{E}[L_{m_0}(w^0)-L_{m_0}(w^*)
\leq V_{m_0}]$, $\alpha$, $\zeta$ 
\Ensure $w_s \leq w_s^* +V_s$
\State $m \gets m_0$
\While{$n \leq |\mathcal{Z}|$}
\State $w_m \gets w^0$
\State $n \gets \min(2m,|\mathcal{Z}|)$
\State $T_{max} \gets \left[2^\alpha \left(\frac{5}{2 } +\frac{1}{2^{\alpha+1}}\right) \right]^\frac{1}{\zeta}\log(n)$
    \State Solve ERM problem on $S_n\subset\mathcal{Z}$ with $T_{max}$ and initial solution $w_m$
    \State $w^0 = w_n$
\EndWhile
\end{algorithmic}
\end{algorithm}

\section{Experiment}
The experiments in this part are carried out with first-order optimization algorithms that have a sub-linear convergence rate on an objective function that meets assumption 1, namely L-smooth and convex. The experiment's purpose is to assess the sub-optimality of these algorithms when adaptive sample size can be used against their fixed sample size counterpart. The gradient descent algorithm is selected from deterministic algorithms, whereas the ADAM algorithm is selected from stochastic algorithms. The logistic function with binary classification is the objective loss function to be minimized.\\
\par 
We Refer to Gradient Descent with adaptive sample size as adaptive gradient descent (adaGD) and for ADAM with adaptive sample size as adaptive ADAM (adaADAM).
The characteristics and total samples attributes of the datasets considered are detailed in table \ref{tab:table1}. Only the digits zero and eight are represented in binary in the MNIST databases. The Gradient Descent algorithm is first ran on every data set for a large number of iteration to obtain the optimal value. Then the ADAM algorithm have fixed parameters as : 1st-order exponential decay $\beta_1 = 0.9$, 2nd-order exponential decay $\beta_2=0.999$, step size $\eta=0.01$ and a small value $\epsilon=1e-8$ to prevent zero-division. The batch size is chosen to be 5 in ADAM and in adaADAM. The gradient descent step size is chosen based on the L-smoothness parameter values: $\gamma =1/L$.\\
\par 
\begin{table}[h]

    \centering
        \caption{Dataset Characteristics}
    \begin{tabular}{c c c c}
    \hline

     Name: & MNIST & RCV1 & a1a \\
     \hline
    training size & 6000 & 20242 & 1,605 \\
    testing size & 5774 & 677,399 & 30,956\\
    features size & 784 & 123 & 47,236\\
    \hline
    \end{tabular}
    \label{tab:table1}

\end{table}

\begin{figure}[!h]
    \centering
    \includegraphics[width=\textwidth]{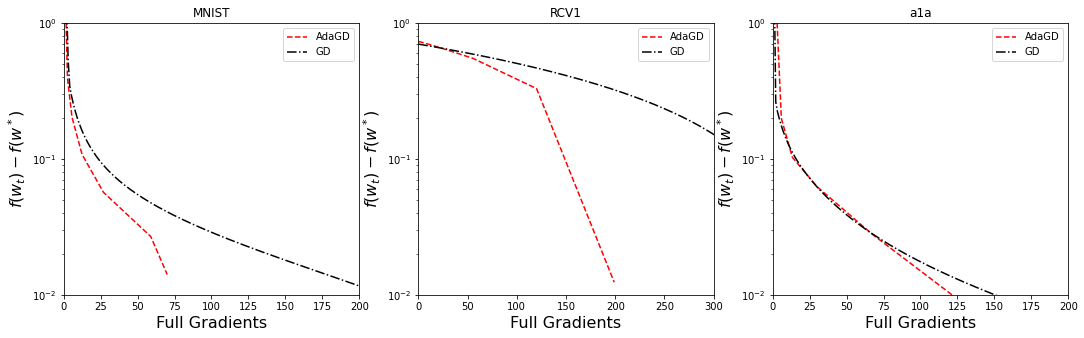}
    \caption{Comparison between Gradient descent and Adaptive sampling of Gradient descent}
    \label{fig:fig1}
\end{figure}
\begin{figure}[!h]
    \centering
    \includegraphics[width=\textwidth]{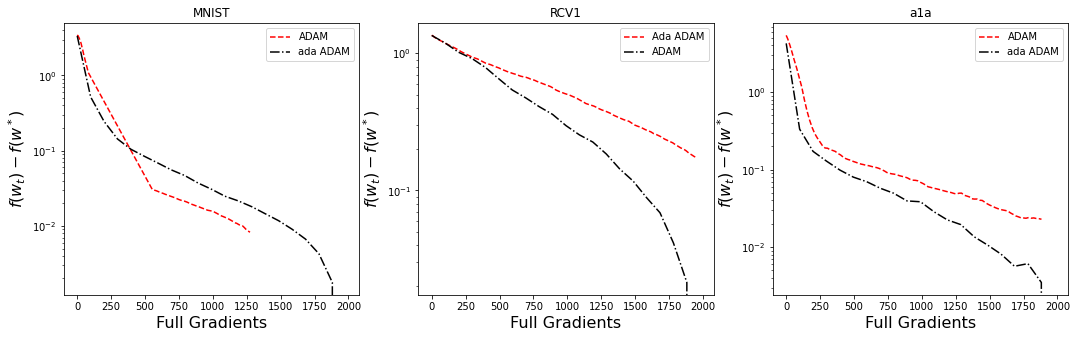}
    \caption{Comparison between ADAM  and Adaptive sampling of ADAM}
    \label{fig:fig2}
\end{figure}
\section{Discussion}

This paper presents an adaptive sampling technique to simplify the ERM problem for first-order optimization algorithms with sublinear convergence under terms of convexity and L-smoothness. Based on the carried experiments, we can infer that adaptive sampling generally resulted in faster convergence for sublinear problems. The adaptive Gradient (adaGD) has reduced the computational complexity of minimizing the logistic loss on the three datasets MNIST, RCV1, and a1a, as shown in figure \ref{fig:fig1}. However, the MNIST dataset has the greatest reduction in complexity, which is characterized in gradient evaluations, while the a1a dataset has the least.
\\
\par 
The reduction in computational complexity in adaptive ADAM (adaADAM) is not significant in some datasets, such as MNIST and RCV1, but it has proven to be significant in a1a as shown in figure \ref{fig:fig2}. The explanation for this could be that the ADAM algorithm stochastically shuffles the dataset after each epoch, which could result in the samples being repeated in different batches, which would employ the adaptive sampling technique implicitly. Future work related to exploring dataset characteristics that would limit convergence rate enhancement for sublinear problems through adaptive sampling is of interest.

\section*{Acknowledgments}
The work in this paper was a continuation of \cite{mokhtari2017first} and it was supported by the course instructor Dr. Bin Gu. 
\bibliographystyle{apalike}
\bibliography{refs}

\end{document}